\documentclass{article}
\usepackage{kr}

\usepackage{times}
\usepackage{soul}
\usepackage{url}
\usepackage[hidelinks]{hyperref}
\usepackage[utf8]{inputenc}
\usepackage[small]{caption}
\usepackage{graphicx}
\usepackage{amsmath}
\usepackage{amsthm}
\usepackage{tikz}
\usepackage{multirow}
\usepackage{booktabs}
\usepackage{amssymb}
\usepackage{algorithm}
\usepackage{algorithmic}
\usepackage{comment}
\usepackage{array}
\usepackage{comment}
\usepackage{amsmath}
\usepackage{amsthm}

\usepackage{bm}
 \usepackage{booktabs}
 \usepackage{float}

\usepackage{algorithm}
\usepackage{algorithmic}
\usepackage{adjustbox}

\newtheorem{example}{Example}

\newtheorem{definition}{Definition}
\newtheorem{proposition}{Proposition}
\newtheorem{corollary}{Corollary}

\title{Enhancing Clustering: An Explainable Approach via Filtered Patterns}

\author{%
Motaz Ben Hassine$^1$\and
Sa\"id Jabbour $^1$
\affiliations
$^1$CRIL, University of Artois \& CNRS, Lens, France\\
\emails
\{benhassine,jabbour\}@cril.fr
}

\begin{document}

\maketitle

\begin{abstract}
Machine learning has become a central research area, with increasing attention devoted to explainable clustering, also known as conceptual clustering, which is a knowledge-driven unsupervised learning paradigm that partitions data into $\theta$ disjoint clusters, where each cluster is described by an explicit symbolic representation, typically expressed as a closed pattern or itemset. By providing human-interpretable cluster descriptions, explainable clustering plays an important role in explainable artificial intelligence and knowledge discovery.
Recent work improved clustering quality by introducing $k$-relaxed frequent patterns ($k$-RFPs), a pattern model that relaxes strict coverage constraints through a generalized $k$-cover definition. This framework integrates constraint-based reasoning, using SAT solvers for pattern generation, with combinatorial optimization, using Integer Linear Programming (ILP) for cluster selection. Despite its effectiveness, this approach suffers from a critical limitation: multiple distinct $k$-RFPs may induce identical $k$-covers, leading to redundant symbolic representations that unnecessarily enlarge the search space and increase computational complexity during cluster construction.
In this paper, we address this redundancy through a pattern reduction framework. Our contributions are threefold. First, we formally characterize the conditions under which distinct $k$-RFPs induce identical $k$-covers, providing theoretical foundations for redundancy detection. Second, we propose an optimization strategy that removes redundant patterns by retaining a single representative pattern for each distinct $k$-cover. Third, we investigate the interpretability and representativeness of the patterns selected by the ILP model by analyzing their robustness with respect to their induced clusters.
Extensive experiments conducted on several real-world datasets demonstrate that the proposed approach significantly reduces the pattern search space, improves computational efficiency, preserves and  enhances in some cases the quality of the resulting clusters.
\end{abstract}

\section{Introduction}
Clustering has been extensively studied in recent decades and is often formulated as an optimization problem. The objective is to identify clusters such that data points within the same cluster are highly similar, while those in different clusters are dissimilar. Depending on the objective function, the optimization task may involve maximizing intra-cluster similarity or minimizing inter-cluster similarity. The clustering problem is known to be $\mathcal{NP}$-hard, and its formulation as an optimization problem in the context of introduction the $k$-Means approach was first defined by \cite{Steinhaus_H_1956_j_bull_acad_polon_sci_division_cmp_k_means}.
Over the years, numerous approaches have been proposed to address the clustering problem. These methods can broadly be categorized into two main families based on how they handle cluster membership: \textit{overlapping} clustering methods, where data points may belong to multiple clusters simultaneously, and \textit{non-overlapping} (or disjoint) clustering methods, where each data point is assigned to exactly one cluster.
In overlapping clustering, the flexibility to share data points across clusters enables more nuanced representations of complex data structures. For instance, \cite{Bezdek1981PatternRW} introduced the fuzzy $c$-means algorithm, a generalization of $k$-Means \cite{macqueen1967multivariate,Steinhaus_H_1956_j_bull_acad_polon_sci_division_cmp_k_means}, where each data point is assigned a fuzzy degree of membership across multiple clusters rather than a hard assignment to a single cluster. Similarly, the Neo-$k$-Means approach \cite{whang2015non} represents an overlapping extension of the classical $k$-Means framework. Overlapping methods have proven particularly valuable in application domains such as social network analysis and biological network modeling, where entities naturally participate in multiple communities or functional groups. Notable examples include the clique percolation method \cite{ahn2010link}, the EAGLE approach \cite{shen2009detect}, and the COPRA method \cite{gregory2010finding}. More recent overlapping clustering techniques continue to advance the field \cite{ZHAO2024108104,ma2025fcpca,ahmed2025signature,zheng2025spatiotemporal}.
In contrast, non-overlapping clustering methods enforce strict partitioning, where clusters form disjoint subsets of the data. This paradigm is fundamental to many classical clustering algorithms and remains widely used due to its simplicity and interpretability. Prominent approaches in this category include the BIRCH method \cite{zhang1996birch} and various hierarchical agglomerative clustering techniques \cite{ackermann2014analysis,ward1963hierarchical}, along with numerous recent advances \cite{tao5201549deep,11045895,LIN2025127560,cao2025solving}.
An important extension of the disjoint clustering paradigm is \textbf{conceptual clustering}, introduced by \cite{stepp1986conceptual}, which is $\mathcal{NP}$-complete. Conceptual clustering distinguishes itself from traditional clustering methods through its emphasis on \textbf{symbolic, interpretable cluster descriptions}. In this framework, each data point is represented by Boolean-valued variables called \textbf{items}, and the dataset is partitioned into $\theta$ disjoint clusters that collectively cover all observations. Crucially, each cluster is not merely defined by implicit similarity measures or geometric centroids, but is explicitly \textbf{described}-or \textbf{covered}-by a symbolic representation consisting of a set of items, referred to as an \textbf{itemset} or \textbf{pattern}. This symbolic characterization provides transparent, human-understandable explanations of cluster membership, making conceptual clustering particularly relevant for \textbf{explainable AI} and \textbf{knowledge discovery} applications where interpretability is paramount. The data used in conceptual clustering are known as \textbf{transactional data} or structured data, where each transaction corresponds to a subset of the item universe.
Several  frameworks have been developed to solve the conceptual clustering problem, which can be classified into three main paradigms. The first category comprises \textbf{declarative methods} based on Boolean satisfiability (SAT) solving \cite{davidson2010sat,metivier2012constrained}, which encode the clustering task as a set of logical constraints. The second category consists of \textbf{constraint-based methods} that leverage constraint programming (CP) techniques \cite{6035705,laghzaoui2023constraint} to efficiently explore the solution space. The third category includes approaches based on \textbf{integer linear programming (ILP)} \cite{ouali2016efficiently,ouali2017integer,dao2018descriptive,DBLP:conf/ecai/HassineJKRG24}, which formulate conceptual clustering as a combinatorial optimization problem. ILP-based methods typically follow a two-phase pipeline: first, a set of candidate itemsets (patterns) is generated through pattern mining techniques; second, these patterns serve as input to an ILP formulation that selects an optimal subset to form the final cluster partition while satisfying coverage and disjointness constraints.
Recent advances in ILP-based conceptual clustering have introduced a generalized pattern model called \textbf{$k$-Relaxed Frequent Patterns ($k$-RFPs)} \cite{DBLP:conf/ecai/HassineJKRG24} aimed at improving clustering quality by relaxing strict itemset coverage requirements. Specifically, a new coverage concept called \textbf{$k$-cover} was defined, which allows transactions to be covered by patterns even when up to $k$ items from the pattern are absent in the transaction. This relaxation enables more flexible and robust cluster formation. The $k$-RFPs are first enumerated using a SAT solver, and subsequently fed into an ILP model that solves the clustering optimization problem. While this approach demonstrates improved clustering quality compared to classical closed patterns, we identified a significant computational bottleneck: \textbf{multiple distinct $k$-RFPs may induce identical $k$-covers}, leading to substantial redundancy in the candidate pattern space. This redundancy not only inflates the number of decision variables in the ILP formulation but also significantly increases solving time. 
To address this limitation, we propose an \textbf{Optimized Conceptual Clustering Method (OCCM)} that systematically eliminates redundant patterns prior to ILP-based cluster selection. Our key insight is that when multiple $k$-RFPs share the same $k$-cover, they represent equivalent symbolic descriptions from a clustering perspective, and thus only one representative pattern per distinct $k$-cover needs to be retained. 
Our contributions are threefold. \textbf{First}, we provide a formal theoretical analysis characterizing the conditions under which distinct $k$-RFPs induce identical $k$-covers, establishing a rigorous foundation for identifying redundancy. 
\textbf{Second}, we propose an efficient filtering algorithm that removes redundant patterns by retaining a single representative pattern for each distinct $k$-cover, thereby reducing the search space while preserving the semantic expressiveness of cluster descriptions. \textbf{Third}, we introduce theoretical measures to evaluate the interpretability and representativeness of selected patterns with respect to their induced clusters, providing a principled way to analyze pattern robustness and explanatory power. Finally, we conduct an extensive experimental evaluation on several real-world datasets to assess the effectiveness of the proposed optimization and to validate the relevance of the interpretability measures.

The remainder of this paper is organized as follows. Section \ref{sec2} introduces the formal notation and key definitions of conceptual clustering problem. Section \ref{sec3} presents our optimization framework, including a theoretical analysis of pattern redundancy, the proposed filtering algorithm, and the theoretical measures introduced to assess pattern interpretability and representativeness. Section \ref{sec4} presents an extensive experimental evaluation on real-world datasets, assessing computational performance, clustering quality, and the interpretability and representativeness of the selected patterns. Finally, Section \ref{sec5} concludes the paper and outlines future research directions.

\section{Formal notation}\label{sec2}

In this section we introduce some formal notation for the conceptual clustering problem. We first provide an overview of key concepts in pattern mining before introducing the principle of conceptual clustering.

Let $\mathcal{U}$ be a universe of symbols, also referred to as items, used to describe objects in the real world.  
For example, a flower may be characterized by Boolean attributes such as the presence of petals, fragrance, radial symmetry, etc. 
Individual elements of $\mathcal{U}$ are denoted by symbols such as $a, b, c$, etc.  
A \textbf{pattern} also called an itemset or classical pattern is defined as a non-empty subset of $\mathcal{U}$, denoted by $\mathcal{I} \subseteq \mathcal{U}$ with $\mathcal{I} \neq \emptyset$.  
The collection of all possible patterns is given by $2^{\mathcal{U}}$, and we use uppercase letters such as $\mathcal{I}, \mathcal{J}, \mathcal{Q}$ to denote the patterns.  

A \textbf{dataset} is a finite set of transactions, expressed as  
%
$\mathcal{D} = \{\tau_1, \tau_2, \dots, \tau_n\}$, where each $\tau_i$ $(i \in [1,\ldots,n])$ is a set of items called a \textbf{transaction}.

%
For a pattern $\mathcal{I}$ and a dataset $\mathcal{D}$, the \textbf{cover} of $\mathcal{I}$ consists of all transactions in $\mathcal{D}$ that contain every element of $\mathcal{I}$. Formally,  
$
\mathsf{Cov}(\mathcal{I}, \mathcal{D}) = \{ \tau_i \mid i \in [1\ldots n],  \tau_i \in \mathcal{D} \text{ and } \mathcal{I} \subseteq \tau_i \}.
$
%
The number of transactions in the cover is called the \textbf{support} of the pattern and is denoted by  
$
\mathsf{Sp}(\mathcal{I}, \mathcal{D}) = |\mathsf{Cov}(\mathcal{I}, \mathcal{D})|.
$  
A pattern $\mathcal{I}$ is considered  as \textbf{closed} if there is no other pattern $\mathcal{J}$ such that $\mathcal{J}$ properly contains $\mathcal{I}$ and has the same support, i.e.,  
$
\mathcal{I} \subset \mathcal{J} \implies \mathsf{Sp}(\mathcal{J}, \mathcal{D}) \neq \mathsf{Sp}(\mathcal{I}, \mathcal{D}).
$ 
\begin{definition}[Conceptual Clustering Problem]
    Let $\theta$ be a positive integer representing the number of clusters, and let $\mathcal{D}$ be a transactional dataset.  
    The goal is to find a partition of $\mathcal{D}$ into $\theta$ disjoint clusters covered by itemsets. Formally, determine a collection
    \[
        \mathcal{C} = \{\mathsf{Cov}(\mathcal{I}_{1}, \mathcal{D}), \mathsf{Cov}(\mathcal{I}_{2}, \mathcal{D}), \ldots, \mathsf{Cov}(\mathcal{I}_{\theta}, \mathcal{D})\},
    \]
    such that the following conditions hold:
    \begin{enumerate}
    \item Closure: All itemsets $\mathcal{I}_{1}, \dots, \mathcal{I}_{\theta}$ must be closed.

        \item Disjointness: $\forall i \neq j, \; \mathsf{Cov}(\mathcal{I}_{i}, \mathcal{D}) \cap \mathsf{Cov}(\mathcal{I}_{j}, \mathcal{D}) = \emptyset$,
        \item Completeness: $\bigcup_{i=1}^{\theta} \mathsf{Cov}(\mathcal{I}_{i}, \mathcal{D}) = \mathcal{D}$,
        \item Cardinality: $|\mathcal{C}| = \theta$.
    \end{enumerate}
\end{definition}
 In the following, we introduce the notions of  $k$-cover and $k$-support \cite{DBLP:conf/ecai/HassineJKRG24}.
\begin{definition}[ $k$-cover and $k$-support]
    Let $\mathcal{D} = \{\tau_1, \tau_2, \ldots, \tau_n\}$ be a dataset and let $k$ be a positive integer.  
    The \textbf{\emph{$k$-cover}} of a pattern $\mathcal{I}$ with respect to $\mathcal{D}$ is defined as:
    \[
        \mathsf{Cov}^{k}(\mathcal{I}, \mathcal{D}) = 
        \big\{ \tau_i \in \mathcal{D} \;\big|\; \tau_i \cap \mathcal{I} \neq \emptyset \;\wedge\; |\mathcal{I} \setminus \tau_i| \leq k \big\}.
    \]

    The \textbf{\emph{$k$-support}} of $\mathcal{I}$ in $\mathcal{D}$ is then given by the cardinality of its $k$-cover:
    \[
        \mathsf{Sp}^{k}(\mathcal{I}, \mathcal{D}) = \big| \mathsf{Cov}^{k}(\mathcal{I}, \mathcal{D}) \big|.
    \]
\end{definition}
We now formalize the notion of a $k$-Relaxed Frequent Pattern ($k$-RFP) \cite{DBLP:conf/ecai/HassineJKRG24}.
\begin{definition}[$k$-RFP]
Let $\mathcal{D}$ denote a dataset and let $\alpha > 0$ be a minimum support threshold.  
A pattern $\mathcal{I}$ is said to be a \textbf{\emph{$k$-Relaxed Frequent Pattern}} if and only if:
\[
    \mathsf{Sp}^{k}(\mathcal{I}, \mathcal{D}) \;\geq\; \alpha.
\]
\end{definition}

It is noteworthy that $k$-RFPs can be extracted using a satisfiability problem (SAT) approach.  
In this approach, a set of constraints is defined and encoded in conjunctive normal form (CNF), such that each satisfying assignment corresponds to a valid $k$-RFP.  
All solutions to the CNF are enumerated, with each solution representing a distinct $k$-RFP. 
\section{Optimized Conceptual Clustering Method (OCCM)} \label{sec3}

This section presents our optimized conceptual clustering method designed to reduce the number of itemsets  by discarding redundant patterns that yield identical covers.
\subsection{Pattern Filtering Strategy}

In the following, we provide a formal presentation of the issue introduced earlier. As a preliminary step, we recall an important property of classical closed itemsets concerning their cover.
\begin{proposition} \label{prop0}
Let ${\cal D}$ a transactional database and $\mathcal{I}$ and $\mathcal{J}$ two closed itemsets. Then, $\mathsf{Cov}(\mathcal{I}, \mathcal{D}) \neq \mathsf{Cov}(\mathcal{J}, \mathcal{D})$.  
\end{proposition}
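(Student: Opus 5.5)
The statement is meant for two \emph{distinct} closed itemsets $\mathcal{I}\neq\mathcal{J}$; for $\mathcal{I}=\mathcal{J}$ it is trivially false, so I will assume $\mathcal{I}\neq\mathcal{J}$ and argue by contradiction. Suppose $\mathsf{Cov}(\mathcal{I},\mathcal{D})=\mathsf{Cov}(\mathcal{J},\mathcal{D})=:T$. The plan is to show that the union $\mathcal{I}\cup\mathcal{J}$ has the same cover $T$. Then closedness of each itemset forces $\mathcal{I}=\mathcal{I}\cup\mathcal{J}=\mathcal{J}$, which is the contradiction.

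\textbf{Step 1: the union has the same cover.} A transaction $\tau$ contains $\mathcal{I}\cup\mathcal{J}$ if and only if it contains both $\mathcal{I}$ and $\mathcal{J}$. Hence
\[
\mathsf{Cov}(\mathcal{I}\cup\mathcal{J},\mathcal{D})=\mathsf{Cov}(\mathcal{I},\mathcal{D})\cap\mathsf{Cov}(\mathcal{J},\mathcal{D})=T .
\]
In particular $\mathsf{Sp}(\mathcal{I}\cup\mathcal{J},\mathcal{D})=\mathsf{Sp}(\mathcal{I},\mathcal{D})=\mathsf{Sp}(\mathcal{J},\mathcal{D})$.

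\textbf{Step 2: apply closedness.} Since $\mathcal{I}\neq\mathcal{J}$, at least one of the inclusions $\mathcal{I}\subset\mathcal{I}\cup\mathcal{J}$ or $\mathcal{J}\subset\mathcal{I}\cup\mathcal{J}$ is proper. Say $\mathcal{I}\subset\mathcal{I}\cup\mathcal{J}$ is proper. Then $\mathcal{I}\cup\mathcal{J}$ is a strict superset of $\mathcal{I}$ with the same support, which contradicts the closedness of $\mathcal{I}$ as defined in Section~\ref{sec2}. The other case is symmetric. Therefore the two covers cannot coincide.

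\textbf{Main obstacle.} The only delicate point is the degenerate case $T=\emptyset$. The argument above still works there, since every superset then also has support $0$, so closedness is equally violated. The edge case that needs care is when $\mathcal{I}\cup\mathcal{J}$ is itself not a valid pattern, but this cannot happen: it is nonempty and a subset of $\mathcal{U}$. So the set-theoretic identity in Step~1 carries the whole argument, and I expect no real difficulty beyond stating explicitly that $\mathcal{I}\neq\mathcal{J}$ is assumed.
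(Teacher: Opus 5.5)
Your proof is correct, but it takes a different route from the paper's. The paper invokes the characterization $\mathcal{I}=\bigcap_{\tau\in\mathsf{Cov}(\mathcal{I},\mathcal{D})}\tau$ (and likewise for $\mathcal{J}$), so equal covers immediately give equal itemsets. In other words, closed itemsets are fixpoints of the closure map $\mathcal{I}\mapsto\bigcap_{\tau\in\mathsf{Cov}(\mathcal{I},\mathcal{D})}\tau$, and hence are determined by their cover.

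You instead work directly from the support-based definition that the paper actually states in Section~2. You use $\mathsf{Cov}(\mathcal{I}\cup\mathcal{J},\mathcal{D})=\mathsf{Cov}(\mathcal{I},\mathcal{D})\cap\mathsf{Cov}(\mathcal{J},\mathcal{D})$ to produce a proper superset of $\mathcal{I}$ or of $\mathcal{J}$ with the same support.

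\textbf{What your route buys.} It needs no auxiliary lemma. The paper says ``by definition of closed itemsets'' for a characterization that is not the definition it gave; it is a standard equivalent form. Proving it is essentially your Step~1 in another guise: the intersection of the covering transactions contains $\mathcal{I}$ and has the same cover, so maximality forces equality. Your argument also handles the empty-cover case uniformly, without needing the convention that an intersection over an empty family equals $\mathcal{U}$.

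\textbf{What the paper's route buys.} It is shorter and more conceptual: the closure operator inverts the map from closed itemsets to covers, which gives injectivity at once.

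You are right to make the hypothesis $\mathcal{I}\neq\mathcal{J}$ explicit. The statement omits it, and the paper's proof also has to add it.
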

\begin{proof}
Let $\mathcal{I}$ and $\mathcal{J}$ be two closed itemsets with $\mathcal{I} \neq \mathcal{J}$.  
Assume, for contradiction, that $\mathsf{Cov}(\mathcal{I}, \mathcal{D}) = \mathsf{Cov}(\mathcal{J}, \mathcal{D})$.  

By definition of closed itemsets, we have:
\[
\mathcal{I} = \bigcap_{\tau \in \mathsf{Cov}(\mathcal{I}, \mathcal{D})} \tau
\quad\text{and}\quad
\mathcal{J} = \bigcap_{\tau \in \mathsf{Cov}(\mathcal{J}, \mathcal{D})} \tau.
\]

Under the assumption that $\mathsf{Cov}(\mathcal{I}, \mathcal{D}) = \mathsf{Cov}(\mathcal{J}, \mathcal{D})$, it follows that:
\[
\mathcal{I} = \bigcap_{\tau \in \mathsf{Cov}(\mathcal{I}, \mathcal{D})} \tau \;
= \bigcap_{\tau \in \mathsf{Cov}(\mathcal{J}, \mathcal{D})} \tau
\;= \mathcal{J}.
\]

We have then : \[ \mathcal{I}= \mathcal{J}.\]

Which contradicts $\mathcal{I} \neq \mathcal{J}$.  
Therefore, two \textbf{distinct} closed itemsets must have \textbf{different} covers.
\end{proof}

Proposition \ref{prop0} states that, for a transactional database ${\cal D}$, no two distinct closed itemsets have the same cover.

It should be noted that, in the context of $k$-RFPs, this property \textbf{is not always true}. To illustrate this, consider the counterexample presented in Example \ref{examp1}.
\begin{example} \label{examp1}
    Let us consider the transaction database $\mathcal{D}$ of Table \ref{tab1} and suppose that $k=1$.
   Let consider two $1$-RFPs, $\mathcal{I}=\{a,b,c,h\}$ and $\mathcal{J}=\{a,b,c,d\}$.
We have $ \mathsf{Cov}^{1}(\mathcal{I}, \mathcal{D})  = \mathsf{Cov}^{1}(\mathcal{J}, \mathcal{D}) = \{ \tau_{1}, \tau_{3} \}.$
\begin{table}[h]
\centering
  \begin{tabular}{lllllllll}
     \hline
        Transactions & \multicolumn{8}{c}{Items} \\
        \hline
        $\tau_{1}$ & $a$ & $b$ & $c$ &  & $e$ &  &  &  \\
        $\tau_{2}$ &  &  &  &  & $e$ & $f$ &  &  \\
        $\tau_{3}$ & $a$ & $b$ &  & $d$ &  &  &  & $h$ \\
        $\tau_{4}$ &  &  &  &  &  &  & $g$ & $h$ \\
        \hline
    \end{tabular}
\caption{An illustrative dataset $\mathcal{D}$} \label{tab1}
\end{table}
\end{example}
As illustrated in Example \ref{examp1}, when $k > 0$, two distinct patterns may share the same cover, making one of them redundant.
If such cases occur frequently, a large number of unnecessary itemsets can be generated and subsequently processed by the ILP solver, leading to increased computational time.
To address this, we propose retaining only a single representative pattern per cover. This strategy reduces the number of generated $k$-RFPs and enhances the efficiency of the ILP solving process.
In the following, to demonstrate the existence of $k$-RFPs sharing the same $k$-cover, we first present  property for a specific condition of $k$-cover sets.
\begin{proposition} \label{prop1}
Let $\mathcal{H} \subseteq \mathcal{D}$ be a $k$-cover with $|\mathcal{H}| \geq 3$, let $\mathcal{U}_{\mathcal{H}} = \bigcup_{\tau \in \mathcal{H}} \tau$ denote its universe of items and let $m$ denote the number of distinct $k$-RFPs that share  $\mathcal{H}$.  
Suppose that  it always exists a maximal intersection:
\[
\mathcal{Q} = \bigcap_{\tau \in \mathcal{H}} \tau \neq \emptyset,
\]
and define $R = \mathcal{U}_{\mathcal{H}} \setminus \mathcal{Q}$, $ \quad \text{with} \quad \forall \; r \in R, \; r \in \tau \implies \forall \; \tau' \in \mathcal{H} \setminus \{\tau\} \; \text{we have} \;r \notin \tau'.$  
Then the number of distinct $k$-RFPs sharing $\mathcal{H}$ is
\[
m = \binom{|R|}{k} = \frac{|R|!}{k! \,(|R|-k)!},
\]
corresponding to patterns of the form $\mathcal{I} = \mathcal{Q} \cup E$, where $E \subseteq R$ and $|E| = k$.
\end{proposition}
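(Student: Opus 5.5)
The plan is to establish a bijection between the $k$-RFPs sharing $\mathcal{H}$ and the $k$-subsets $E \subseteq R$, via $E \mapsto \mathcal{Q} \cup E$. The count $m=\binom{|R|}{k}$ then follows immediately, since distinct $E \subseteq R$ give distinct patterns $\mathcal{Q}\cup E$ (because $\mathcal{Q}\cap R=\emptyset$). The argument has two directions: each such pattern has $k$-cover exactly $\mathcal{H}$, and every $k$-RFP whose $k$-cover is $\mathcal{H}$ has this form.

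\textbf{Forward direction.} Fix $E\subseteq R$ with $|E|=k$ and put $\mathcal{I}=\mathcal{Q}\cup E$.
\begin{itemize}
\item For $\tau\in\mathcal{H}$, we have $\mathcal{Q}\subseteq\tau$. Hence $\tau\cap\mathcal{I}\supseteq\mathcal{Q}\neq\emptyset$, and $|\mathcal{I}\setminus\tau| = |E\setminus\tau|\le k$. So $\mathcal{H}\subseteq\mathsf{Cov}^k(\mathcal{I},\mathcal{D})$.
\item For $\tau\notin\mathcal{H}$, I will use the hypotheses that $\mathcal{H}$ is a $k$-cover and that $\mathcal{Q}$ is the maximal intersection. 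I read these as saying that the items of $R$ are private to the transactions of $\mathcal{H}$, and that a transaction outside $\mathcal{H}$ misses at least one item of $\mathcal{Q}$.
\item Under this reading, $E\cap\tau=\emptyset$, so $|\mathcal{I}\setminus\tau|\ge |E|+|\mathcal{Q}\setminus\tau|\ge k+1$, and $\tau$ is excluded.
\end{itemize}
Thus $\mathsf{Cov}^k(\mathcal{I},\mathcal{D})=\mathcal{H}$. Since $|\mathcal{H}|\ge 3$, taking $\alpha\le|\mathcal{H}|$ makes $\mathcal{I}$ a $k$-RFP.

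\textbf{Converse direction.} Let $\mathcal{I}$ be a $k$-RFP with $\mathsf{Cov}^k(\mathcal{I},\mathcal{D})=\mathcal{H}$, taken maximal in the sense of the statement. I would proceed in three steps.
\begin{itemize}
\item \emph{$\mathcal{I}\subseteq\mathcal{U}_{\mathcal{H}}$.} Any item outside $\mathcal{U}_{\mathcal{H}}$ is missing from every $\tau\in\mathcal{H}$, so it only consumes tolerance; it can be discarded without changing the cover.
\item \emph{$\mathcal{Q}\subseteq\mathcal{I}$.} Adding an item of $\mathcal{Q}$ never increases $|\mathcal{I}\setminus\tau|$ for $\tau\in\mathcal{H}$. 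Maximality therefore forces $\mathcal{Q}\subseteq\mathcal{I}$, exactly as closedness does in Proposition~\ref{prop0}.
\item \emph{$|\mathcal{I}\cap R|=k$.} Write $E=\mathcal{I}\cap R$. Each $r\in R$ lies in exactly one $\tau\in\mathcal{H}$, so for $\tau\in\mathcal{H}$ we have $|\mathcal{I}\setminus\tau|=|E|-|E\cap\tau|$.
\begin{itemize}
\item If $|E|>k$, use $|\mathcal{H}|\ge 3$ to find a $\tau\in\mathcal{H}$ holding few items of $E$. Then $|E\setminus\tau|>k$, so $\tau$ drops out of the cover, a contradiction.
\item If $|E|<k$, then $E$ can be enlarged inside $R$ while keeping the same cover, by the forward-direction computation, contradicting maximality.
\end{itemize}
\end{itemize}

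\textbf{Main obstacle.} I expect the converse to be the hard part, specifically two points: the case $|E|>k$ when the items of $E$ are concentrated in a single transaction, and making ``maximal'' precise so that it excludes subsets of $\mathcal{Q}$. For the first point, I would try to use the privacy of $R$ together with $|\mathcal{H}|\ge3$ to bound $|E\cap\tau|$ uniformly across $\tau\in\mathcal{H}$. For the second, I would mirror the closure argument of Proposition~\ref{prop0}.
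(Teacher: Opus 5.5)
\textbf{There is a genuine gap: the statement is false as written, and your patches do not repair it.} Your outline is the paper's own: a pattern with $k$-cover $\mathcal{H}$ contains $\mathcal{Q}$, adds exactly $k$ items of $R$, and privacy of $R$ makes the choices distinct. The points you flag are essentially the ones the paper's proof asserts without argument. They are not just hard, though; several of them are false.

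\textbf{Forward direction.} The privacy hypothesis only compares transactions inside $\mathcal{H}$. The fact that $\mathcal{H}$ is a $k$-cover does not extend it to $\mathcal{D}\setminus\mathcal{H}$. Take $k=1$ and $\mathcal{D}=\{\tau_1,\tau_2,\tau_3,\tau_4\}$ with $\tau_1=\{a,b,e\}$, $\tau_2=\{a,b,f\}$, $\tau_3=\{a,b,g\}$, $\tau_4=\{a,e,x\}$, and let $\mathcal{H}=\{\tau_1,\tau_2,\tau_3\}$. Then $\mathsf{Cov}^1(\{a,b,f\},\mathcal{D})=\mathcal{H}$, and every hypothesis holds with $\mathcal{Q}=\{a,b\}$ and $R=\{e,f,g\}$. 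Yet $\tau_4\in\mathsf{Cov}^1(\{a,b,e\},\mathcal{D})$, since it meets the pattern and misses only $b$. So only two of the three patterns $\mathcal{Q}\cup\{r\}$ share $\mathcal{H}$. Your ``reading'' is therefore an extra hypothesis, not an interpretation. For $k=1$, your second condition (each transaction outside $\mathcal{H}$ misses an item of $\mathcal{Q}$) can in fact be derived from the hypotheses; the first cannot.

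\textbf{Converse direction: maximality and the step $\mathcal{I}\subseteq\mathcal{U}_{\mathcal{H}}$.} The statement imposes no maximality on patterns; ``maximal'' there qualifies $\mathcal{Q}$. Without it the count already fails in Example~\ref{exampp2}: $\{a,b,c\}$ and even $\{b\}$ have $1$-cover $\mathcal{H}$. The latter does not contain $\mathcal{Q}$, which also refutes the first sentence of the paper's proof.

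If you add inclusion-maximality, your step $\mathcal{I}\subseteq\mathcal{U}_{\mathcal{H}}$ runs the wrong way. Deleting an item outside $\mathcal{U}_{\mathcal{H}}$ produces a smaller pattern; it does not show that $\mathcal{I}$ has no such item, and maximality pushes toward including such items. In Example~\ref{exampp2}, $\{a,b,c,h\}$ has $1$-cover exactly $\mathcal{H}$, because $\tau_4$ and $\tau_5$ miss both $b$ and $c$. It is inclusion-maximal and has the same size as $\{a,b,c,e\}$. The same holds with $i$ or $j$ in place of $h$, so at least six maximal patterns share $\mathcal{H}$. This example satisfies both of your extra conditions, so $\mathcal{I}\subseteq\mathcal{U}_{\mathcal{H}}$ must be assumed, not proved.

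\textbf{Converse direction: the case $|E|>k$.} The sets $E\cap\tau$ $(\tau\in\mathcal{H})$ partition $E$. With $|\mathcal{H}|\geq 3$, pigeonhole therefore only yields some $\tau$ with $|E\setminus\tau|\geq 2|E|/3$, and this exceeds $k$ only when $|E|>3k/2$. That settles $k=1$ but not $k\geq 2$. For $k=2$ and $\mathcal{D}=\mathcal{H}=\{\{a,e\},\{a,f\},\{a,g\}\}$, the pattern $\{a,e,f,g\}$ has $2$-cover $\mathcal{H}$ and strictly contains all three candidates $\mathcal{Q}\cup E$ with $|E|=2$. The dangerous configuration is $E$ spread evenly over $\mathcal{H}$; concentration in one transaction is the easy case.

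\textbf{What survives.} Your argument does prove the case $k=1$, under two additional restrictions: items of $R$ occur in no transaction outside $\mathcal{H}$, and only inclusion-maximal patterns with $\mathcal{Q}\subseteq\mathcal{I}\subseteq\mathcal{U}_{\mathcal{H}}$ are counted. Then $\mathcal{I}=\mathcal{Q}\cup E$ with $E\subseteq R$, and pigeonhole forces $|E|\leq 1$. Your forward computation shows that each $\mathcal{Q}\cup\{r\}$ has $1$-cover $\mathcal{H}$, so $m=|R|$.
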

%
%
\begin{proof}
By definition of a $k$-cover, any $k$-RFP denoted $\mathcal{I}$ with $\mathsf{Cov}^k(\mathcal{I},\mathcal{D}) = \mathcal{H}$ must contain all items of $\mathcal{Q}$, since $\mathcal{Q}$ appears in every transaction of $\mathcal{H}$.  
Because $\mathcal{I}$ may differ from each transaction in $\mathcal{H}$ by at most $k$ items, the only possible extensions of $\mathcal{Q}$ are obtained by adding exactly $k$ items from $R$. The condition that each item of $R$ belongs to exactly one transaction ensures that every such extension defines a distinct $k$-RFP.  
Thus, the number of distinct $k$-RFPs is exactly the number of ways to select $k$ items from $R$, i.e.,
\[
m = \binom{|R|}{k} = \frac{|R|!}{k!\,(|R|-k)!}.
\]
\end{proof}
%
%
%
%
%
%
%
%
%
%
%
%
%
%
Proposition \ref{prop1} states that given a $k$-cover set $\mathcal{H}$ with $|\mathcal{H}| \geq 3$, and assuming the existence of a maximal intersection $\mathcal{Q}$, i.e., the largest set of items common to all transactions in $\mathcal{H}$, we define $\mathcal{U}_{\mathcal{H}}$ as the set of all items appearing in $\mathcal{H}$ and $R = \mathcal{U}_{\mathcal{H}} \setminus \mathcal{Q}$ as the remaining items, with the condition that each item in $R$ appears in exactly one transaction. Any $k$-RFP covering $\mathcal{H}$ must include all items in $\mathcal{Q}$. To form a valid $k$-RFP, we select exactly $k$ items from $R$. Since items in $R$ occur in distinct transactions, each selection yields a unique $k$-RFP. Consequently, the total number of distinct $k$-RFPs sharing the cover $\mathcal{H}$ is exactly $\binom{|R|}{k}$.

To further clarify Proposition \ref{prop1}, we provide an illustrative example in Example \ref{exampp2}.
\begin{example}\label{exampp2}
Consider the transactional database $\mathcal{D}$ from Table \ref{tab11}. Let $k=1$, and select the 1-cover $\mathcal{H} = \{\tau_1, \tau_2, \tau_3\}$. We have $\mathcal{Q} = \{a,b,c\}$, 
$\mathcal{U}_{\mathcal{H}} = \{a,b,c,e,f,g\}$, and $R = \{e,f,g\}$.  
The number of 1-RFPs covering $\mathcal{H}$ is
\[
m = \binom{|R|}{1} = \frac{3!}{1! \,(3-1)!} = 3.
\] 
The corresponding 1-RFPs sharing $\mathcal{H}$ are $\{a,b,c,e\}$, $\{a,b,c,f\}$ and $\{a,b,c,g\}$.
\begin{table}[h]
\centering
  \begin{tabular}{lllllllllll}
        \hline
        Transactions & \multicolumn{10}{c}{Items} \\
        \hline
        $\tau_{1}$ & $a$ & $b$ & $c$ &  & $e$ &  &  & & & \\
        $\tau_{2}$ & $a$ & $b$ & $c$ &  &  & $f$ &  & & &   \\
        $\tau_{3}$ & $a$ & $b$ & $c$ &  &  &  & $g$ & & &  \\
        $\tau_{4}$ & $a$ &  &  &  &  &  &  & $h$ & $i$ &  \\
        $\tau_{5}$ & $a$ &  &  &  &  &  &  & $h$  &$i$ & $j$ \\
        \hline
    \end{tabular}
\caption{A representative dataset $\mathcal{D}$} \label{tab11}
\end{table}
\end{example}

Consequently, before introducing our method, we formally state the issue in Corollary \ref{cor1}. 
%
%
%
\begin{corollary} \label{cor1}
Let $k$ be a positive integer, and let $\mathcal{A} \subset 2^{\mathcal{U}}$ be a collection of $k$-RFPs.  
Then, it may exist $\mathcal{I}, \mathcal{J} \in \mathcal{A} \; $ s.t. $ \;$  
$\mathsf{Cov}^{k}(\mathcal{I}, \mathcal{D}) = \mathsf{Cov}^{k}(\mathcal{J}, \mathcal{D}). $
\end{corollary}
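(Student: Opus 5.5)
The statement is existential: for some dataset $\mathcal{D}$ and some positive $k$, a collection $\mathcal{A}$ of $k$-RFPs can contain two distinct patterns with the same $k$-cover. So the plan is to exhibit one explicit witness and check it directly against the definitions, rather than to derive the claim in general. The natural witness is already in Example \ref{examp1}. I will also cite Proposition \ref{prop1} and Example \ref{exampp2} to show that this is a recurring phenomenon and not an isolated accident.

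First, I will fix $\mathcal{D}$ as in Table \ref{tab1}, take $k=1$, $\mathcal{I}=\{a,b,c,h\}$ and $\mathcal{J}=\{a,b,c,d\}$, and let $\mathcal{A}=\{\mathcal{I},\mathcal{J}\}$. Second, I will compute $\mathsf{Cov}^{1}(\mathcal{I},\mathcal{D})$ transaction by transaction. For $\tau_1$, the intersection with $\mathcal{I}$ is nonempty and $\mathcal{I}\setminus\tau_1=\{h\}$, so $\tau_1$ is included. For $\tau_3$, $\mathcal{I}\setminus\tau_3=\{c\}$, so $\tau_3$ is included. For $\tau_2$, $\tau_2\cap\mathcal{I}=\emptyset$, so it is excluded. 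For $\tau_4$, $\mathcal{I}\setminus\tau_4=\{a,b,c\}$ has size $3>1$, so it is excluded. Hence $\mathsf{Cov}^{1}(\mathcal{I},\mathcal{D})=\{\tau_1,\tau_3\}$.

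Third, the same check for $\mathcal{J}$ gives $\mathcal{J}\setminus\tau_1=\{d\}$ and $\mathcal{J}\setminus\tau_3=\{c\}$, so $\tau_1$ and $\tau_3$ are included. Here $\tau_2\cap\mathcal{J}=\emptyset$, and $\tau_4\cap\mathcal{J}=\emptyset$ as well, so both are excluded. Hence $\mathsf{Cov}^{1}(\mathcal{J},\mathcal{D})=\{\tau_1,\tau_3\}$ too. Fourth, I will choose the support threshold as $\alpha\le 2$ (for instance $\alpha=2$). Then both patterns satisfy $\mathsf{Sp}^{1}\ge\alpha$, so both are genuinely $1$-RFPs, and they are distinct because $h\ne d$.

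The only delicate point is that the corollary leaves $\alpha$ implicit, so membership of $\mathcal{I}$ and $\mathcal{J}$ in a collection of $k$-RFPs depends on that choice. I will handle this by stating explicitly that $\alpha$ is chosen at most the common $k$-support. To show the phenomenon persists under the paper's structural hypotheses, I will add Example \ref{exampp2} as a second instance. There Proposition \ref{prop1} gives $m=\binom{3}{1}=3>1$ distinct $1$-RFPs sharing the cover $\mathcal{H}=\{\tau_1,\tau_2,\tau_3\}$, which again witnesses the claim.
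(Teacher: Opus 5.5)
Your proposal is correct and follows the paper's own argument, which simply points to Examples~\ref{examp1} and~\ref{exampp2} as witnesses. Your transaction-by-transaction check of Example~\ref{examp1}, together with your remark that $\alpha \le 2$ is needed for both patterns to be $1$-RFPs, makes it more careful than the paper's one-line proof. For the supplementary Example~\ref{exampp2}, check the three covers directly instead of invoking Proposition~\ref{prop1}, because its exact count $\binom{|R|}{k}$ does not hold as stated: there, $\{a,b,c\}$ and even $\{b\}$ also have $1$-cover $\mathcal{H}$.
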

%
\begin{proof}
Examples \ref{examp1} and \ref{exampp2} serves as a concrete counterexamples, illustrating that distinct $k$-RFPs can share the same cover.
\end{proof}

%
Corollary \ref{cor1} states that, for a given set of relaxed frequent patterns, there may exist two patterns that share the same cover.

In what follows, we introduce our proposed approach. To provide context, we first summarize the main steps of the conceptual clustering process, as illustrated in Figure~\ref{fig1}.
\begin{figure}[h]
\begin{center}
\begin{tikzpicture}[node distance=2cm, >=stealth, thick]

\tikzstyle{process} = [rectangle, draw, rounded corners=2pt, 
                       minimum width=5cm, minimum height=1cm, align=center]

\node (step1) [process] {Generate $k$-RFPs  using SAT solver};
\node (step2) [process, below of=step1] {Extract clusters from 
$k$-RFPs via ILP};

\draw[->] (step1) -- (step2);

\end{tikzpicture}
\end{center}
\caption{Conceptual clustering approach: main steps}
\label{fig1}
\end{figure}
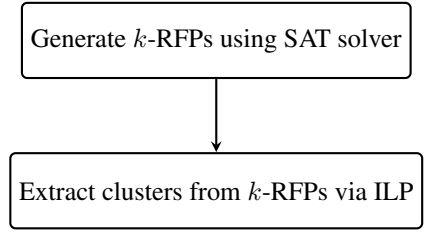

Our solution consists in eliminating redundant patterns, specifically those that share the same cover. This process effectively filters the pattern set. The overall optimized approach is illustrated in Figure~\ref{fig2}.
\begin{figure}[h]
\begin{center}
\begin{tikzpicture}[node distance=2cm, >=stealth, thick]

\tikzstyle{process} = [rectangle, draw, rounded corners=2pt, 
                       minimum width=5cm, minimum height=1cm, align=center]

\node (step1) [process] {Generate $k$-RFPs using SAT solver};
\node (step2) [process, below of=step1, fill=blue!20, text=black] {Filter redundant $k$-RFPs};
\node (step3) [process, below of=step2] {Extract clusters from $k$-RFPs via ILP};
%
%
\draw[->] (step1) -- (step2);
\draw[->] (step2) -- (step3);

\end{tikzpicture}
\end{center}
\caption{Optimized Conceptual clustering approach: main steps}
\label{fig2}
\end{figure}
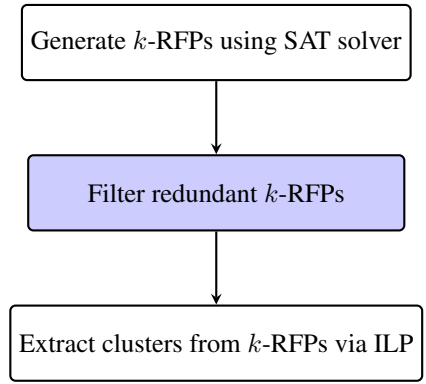

The proposed solution is formalized in the following definition.  
\begin{definition}[filtered patterns] \label{prop2}
Let $\mathcal{A} \subset 2^{\mathcal{U}}$ be a collection of $k$-RFPs.
The \textbf{filtered set} $\mathcal{A}_{f} \subset 2^{\mathcal{U}}$ is defined as : 
\[
\mathcal{A}_{f} =   \mathcal{A} \setminus \{\arg\min_{\mathcal{Q} \in \{\mathcal{I},\mathcal{J}\}} |\mathcal{Q}| \}\] 

where  

$ \forall \; \mathcal{I}, \mathcal{J} \in \mathcal{A}, \; \mathcal{I} \neq \mathcal{J}, \;  \mathsf{Cov}^{k}(\mathcal{I}, \mathcal{D}) = \mathsf{Cov}^{k}(\mathcal{J}, \mathcal{D}).$
 
%
%
\end{definition}

Definition \ref{prop2} establishes that when two $k$-RFPs share the same cover, only the maximal itemset is retained in the filtered pattern set $\mathcal{A}_{f}$. This choice is justified by interpretability, as larger itemsets provide more specific and descriptive characterizations of their covered transactions, and thus better represent the associated cluster.

In what follows, we formally present our filtration process in Algorithm~\ref{alg1}, which aims to eliminate redundant patterns by retaining a single representative pattern for each distinct cover. 
Given a set of candidate patterns $\mathcal{A}$ and a transactional dataset $\mathcal{D}$, the algorithm first sorts the patterns in increasing order of their size. It then iteratively computes, for each pattern $\mathcal{I} \in \mathcal{A}$, its associated $k$-cover $\mathsf{Cov}^{k}(\mathcal{I}, \mathcal{D})$. To ensure uniqueness of patterns with respect to their covers, the algorithm maintains a mapping, denoted by $\mathit{CoverMap}$, which associates each encountered $k$-cover with a single pattern. Formally, $\mathit{CoverMap}$ is modeled as a partial function $\mathit{CoverMap} : \mathcal{C} \rightharpoonup \mathcal{A}$, where $\mathcal{C}$ denotes the set of all possible $k$-covers and $\mathcal{A}$ is the set of candidate patterns. Whenever a newly processed pattern produces a $k$-cover that already appears in $\mathit{CoverMap}$, the corresponding entry is overwritten. Since patterns are processed in increasing order of size, this mechanism guarantees that only the largest pattern associated with each cover is retained. The filtered pattern set $\mathcal{A}_f$ is then defined as
\[
\mathcal{A}_f = \{ \mathit{CoverMap}(c) \mid c \in \mathrm{dom}(\mathit{CoverMap}) \} \subseteq \mathcal{A},
\]
where $\mathrm{dom}(\mathit{CoverMap})$ denotes the set of covers for which $\mathit{CoverMap}$ is defined. This procedure ensures that $\mathcal{A}_f$ contains exactly one representative pattern for each distinct $k$-cover while favoring patterns that provide more specific and descriptive characterizations of the covered transactions.

\begin{algorithm}[h]
\caption{Pattern Filtering Algorithm}
\label{alg1}
\begin{algorithmic}[1]
\REQUIRE $\mathcal{A}$, $\mathcal{D}$, $k$ 
\ENSURE $\mathcal{A}_f$ 

\STATE $\mathit{CoverMap} \gets \emptyset$ \COMMENT{Dictionary: $k$-cover $\to$ pattern}
\STATE $\mathcal{A}_{\text{sorted}} \gets \textsc{Sort}(\mathcal{A})$ \COMMENT{Smallest to largest by size}

\FOR{$\mathcal{I} \in \mathcal{A}_{\text{sorted}}$}
    \STATE $c \gets \mathsf{Cov}^{k}(\mathcal{I}, \mathcal{D})$ 
    \STATE $\mathit{CoverMap}[c] \gets \mathcal{I}$ \COMMENT{Overwrites if $c$ already exists}
\ENDFOR
\STATE $\mathcal{A}_f \gets \{ \mathit{CoverMap}(c) \mid c \in \mathrm{dom}(\mathit{CoverMap}) \}$

\RETURN $\mathcal{A}_f $
\end{algorithmic}
\end{algorithm}
\paragraph{Complexity Analysis :}
let $|\mathcal{A}|$ denote the number of $k$-RFPs, $|\mathcal{D}|$ the number of transactions, $|\mathcal{I}|_{\text{avg}}$ the average pattern length, and $|\tau|_{\text{avg}}$ the average transaction length. The algorithm first sorts $\mathcal{A}$ by pattern size in $O(|\mathcal{A}|\log|\mathcal{A}|)$ time. Computing the $k$-cover for each pattern $\mathcal{I} \in \mathcal{A}$ requires iterating over all transactions $\tau \in \mathcal{D}$ and checking whether $\mathcal{I}$ satisfies the $k$-cover conditions with respect to $\tau$, which incurs $O(|\mathcal{I}|_{\text{avg}} \cdot |\tau|_{\text{avg}})$ time per transaction. Therefore, the total time for computing $k$-covers for all patterns is 
$O(|\mathcal{A}| \cdot |\mathcal{D}| \cdot |\mathcal{I}|_{\text{avg}} \cdot |\tau|_{\text{avg}})$. Inserting each cover into the dictionary requires at most $O(|\mathcal{D}|)$ time per pattern, giving an overall cost of $O(|\mathcal{A}| \cdot |\mathcal{D}|)$. Combining these steps, the overall time complexity of the algorithm is then:
\[
O\Big(|\mathcal{A}|\log|\mathcal{A}| + |\mathcal{A}| \cdot |\mathcal{D}| \cdot |\mathcal{I}|_{\text{avg}} \cdot |\tau|_{\text{avg}}\Big).
\]

\subsection{Integer Linear Programming Formulation for Explainable Clustering}

We next introduce the integer linear programming (ILP) formulation underlying our approach. Before presenting our used model, we provide a brief overview of the general ILP framework. 

An ILP \cite{nemhauser1988integer} is an optimization problem in which a linear objective function is either maximized or minimized, subject to a set of linear constraints, with the additional requirement that all decision variables take integer values. ILPs have been widely applied in areas such as logistics, finance, manufacturing, and telecommunications. Solving an ILP consists of finding integer assignments to the decision variables that satisfy all constraints while optimizing the objective function. Modern solution methods, such as branch-and-bound, explore the solution space efficiently to determine optimal solutions \cite{nemhauser1988integer,ceria1998cutting}, and numerous specialized techniques have been proposed for various application domains \cite{papadomanolakis2007integer,cao2022trajectory,alhasnawi2025new,mohammed2023multi}.

Formally, let $\mathbf{x} \in \{0,1\}^m$ denote the vector of $m$ binary decision variables, and let $\mathbf{f} \in \mathbb{R}^m$ represent the vector of objective coefficients. Let $\mathbf{B} \in \mathbb{R}^{p \times m}$ be the matrix of constraint coefficients and $\mathbf{d} \in \mathbb{R}^{p}$ the vector of constraint bounds. Then, a general ILP can be written as

\begin{align*}
\text{maximize or minimize} \quad & \mathbf{f}^\top \mathbf{x}, \\
\text{subject to} \quad & \mathbf{B} \, \mathbf{x} \;\; (\leq, =, \text{ or } \geq) \;\; \mathbf{d}, \\
& x_i \in \{0,1\}, \quad i = 1, 2, \dots, m.
\end{align*}

In our work, we used the ILP model adopted by \cite{ouali2016efficiently,DBLP:conf/ecai/HassineJKRG24}, adapted to the formal notation introduced above. Formally, this model selects representative $k$-RFPs to cover the dataset while respecting the conceptual clustering constraints.

Let $\mathcal{A}_f$ denote the set of candidate closed $k$-RFPs extracted from $\mathcal{D}$, and let $p = |\mathcal{A}_f|$ and $m = |\mathcal{D}|$ is the number of transactions in the dataset.
We introduce a vector of binary decision variables
\[
\mathbf{x} = (x_\mathcal{I})_{\mathcal{I} \in \mathcal{A}_f}, \quad
x_\mathcal{I} =
\begin{cases}
1, & \text{if $\mathcal{I}$ is selected} \\
0, & \text{otherwise}
\end{cases}
\]

To encode the coverage of transactions by patterns, we define a binary matrix $B = (b_{\tau,\mathcal{I}}) \in \{0,1\}^{m \times p}$ such that
\[
b_{\tau,\mathcal{I}} =
\begin{cases}
1 & \text{if } \tau \in \mathsf{Cov}^k(\mathcal{I}, \mathcal{D}), \\
0 & \text{otherwise.}
\end{cases}
\]

Let $w_\mathcal{I}$ denote the weight of pattern $\mathcal{I}$, defined as the number of items in $\mathcal{I}$, i.e., $w_\mathcal{I} = |\mathcal{I}|$.
The ILP selects exactly $\theta$ representative $k$-RFPs while maximizing the sum of the sizes of the selected patterns. The model is formally presented as follows:

\begin{align*}
\text{maximize} \quad & \sum_{\mathcal{I} \in \mathcal{A}_f} w_\mathcal{I} \, x_\mathcal{I}, \\
\text{subject to : } \quad (1)  & \sum_{\mathcal{I} \in \mathcal{A}_f} b_{\tau,\mathcal{I}} \, x_\mathcal{I} = 1, \quad \forall \tau \in \mathcal{D}, \\ (2)
& \sum_{\mathcal{I} \in \mathcal{A}_f} x_\mathcal{I} = \theta, \\
& x_\mathcal{I} \in \{0,1\}, \quad \forall \;\mathcal{I} \in \mathcal{A}_f.
\end{align*}

Constraint (1) ensures that each transaction belongs to exactly one selected cluster, while constraint (2) enforces that exactly $\theta$ clusters are chosen. The objective maximizes the total size of the selected $k$-RFPs.

\subsection{Pattern-Based Explanations for Cluster Representation}
We recall that the ILP model employed in this work aims to determine an optimal partition into $\theta$ clusters by selecting one representative $k$-RFP for each cluster. 
However, an important \textbf{question} arises regarding the interpretability of the selected patterns: to what extent does a chosen pattern truly represent its induced cluster ? In particular, it is necessary to assess whether the selected pattern provides a reliable and meaningful description of it induced cluster.

To address this question, we evaluate the representativeness of a pattern from a cooperative contribution perspective using \textbf{Shapley values}. This approach allows us to quantify the contribution of each item within a $k$-RFP selected by the ILP model to the cluster it induces. In this framework, the items composing the pattern are treated as explanatory components, whose combined presence determines the extent to which the pattern accurately captures the structure of its associated cluster. 
By evaluating these contributions, we can assess the extent to which a pattern provides a faithful and interpretable representation of the cluster it covers.

First, to quantify the representativeness of a pattern with respect to its induced cluster, we define a measure called the \emph{importance of a pattern}, denoted $\mathrm{Imp}(\mathcal{I})$. For a selected $k$-RFP denoted $\mathcal{I}$, this measure computes, for each transaction in its $k$-cover set, the fraction of items of $\mathcal{I}$ that are present in that transaction, and then sums these fractions over all transactions in the cluster. Formally, the importance of $\mathcal{I}$ is defined as:
\[
\mathrm{Imp}(\mathcal{I}) = \sum_{\tau \in \mathsf{Cov}^k(\mathcal{I}, \mathcal{D})} \frac{|\mathcal{I} \cap \tau|}{|\mathcal{I}|}.
\]

This measure reflects how consistently the items of the pattern appear across the transactions of its induced cluster, with higher values indicating stronger representativeness. 

To evaluate the contribution of items within a selected pattern, we define a cooperative game in which each item of the pattern is considered as a player.

Let $\mathcal{I} \in \mathcal{A}_f$ be a $k$-RFP selected by the ILP model, and let $S \subseteq \mathcal{I}$ denote a non-empty subset of items forming a subpattern, i.e., $S \in 2^{\mathcal{I}} \setminus \{\emptyset\}$. The characteristic function $v(S)$ measures the representativeness of the subpattern $S$ with respect to the dataset using the importance measure introduced in this work. Formally, $v(S)$ is defined as follows:
\[
v(S)=\mathrm{Imp}(\mathcal{S}).
\]

Under this formulation, the Shapley value of an item $a \in \mathcal{I}$ is defined as:

\[
\phi(a) =
\sum_{S \subseteq \mathcal{I} \setminus \{a\}}
\frac{|S|!(|\mathcal{I}|-|S|-1)!}{|\mathcal{I}|!}
\left( v(S \cup \{a\}) - v(S) \right).
\]

The quantity $\phi(a)$ evaluates the average marginal contribution of item $a$ across all possible subsets of  the selected pattern $\mathcal{I}$. Intuitively, it quantifies how much the presence of $a$ improves the ability of the pattern to represent its associated cluster. A large Shapley value indicates that the item plays a central explanatory role, whereas a small value suggests that the item contributes little additional descriptive information.

To better understand the interpretability of the selected patterns, we propose two complementary measures capturing both the distribution of item contributions and the stability of the induced clusters. These measures rely on the Shapley values computed for items within a pattern and on the sensitivity of the cluster when items are removed.

\paragraph{Shapley Value Variance (SVV).}

While Shapley values quantify the individual contribution of each item to the representativeness of a pattern, it is also important to analyze how these contributions are distributed among the items composing the pattern. To this end, we define the \textit{Shapley value variance} (SVV), which measures the dispersion of item contributions inside a pattern.

Let $\mathcal{I}$ be a selected pattern, and let $\phi(a)$ denote the Shapley value associated with an item $a \in \mathcal{I}$. The SVV of $\mathcal{I}$ is defined as:

\[
\text{SVV}(\mathcal{I}) = \frac{1}{|\mathcal{I}|} \sum_{a \in \mathcal{I}} \left(\phi(a) - \overline{\phi}\right)^2
\]

where

\[
\overline{\phi} = \frac{1}{|\mathcal{I}|} \sum_{a \in \mathcal{I}} \phi(a),
\]
denotes the average Shapley value within the pattern.
This measure captures the heterogeneity of item contributions. A \textbf{low variance indicates that all items contribute similarly to the pattern representativeness}, suggesting a balanced and homogeneous structure. Conversely, a \textbf{high variance reveals that the pattern is dominated by a subset of highly influential items}, while other items contribute less to the cluster characterization.
\paragraph{Average Cluster Stability (ACS).}
To evaluate the robustness of a pattern with respect to the cluster it induces, we introduce the \textit{Average Cluster Stability} (ACS). This measure quantifies how much the cluster induced by a pattern changes when individual items are removed.

Let $C_{\mathcal{I}}$ denote the cluster induced by a pattern $\mathcal{I}$, i.e., $C_{\mathcal{I}} = \mathsf{Cov}^k(\mathcal{I}, \mathcal{D})$, and let $C_{\mathcal{I} \setminus \{a\}}$ denote the cluster induced after removing the item $a \in \mathcal{I}$, i.e., $C_{\mathcal{I} \setminus \{a\}} = \mathsf{Cov}^k(\mathcal{I} \setminus \{a\}, \mathcal{D})$.
For each item, we compute the \textbf{Jaccard Similarity} between the original cluster and the cluster obtained after removing the item. The ACS of pattern $\mathcal{I}$ is defined as:

\[
\text{ACS}(\mathcal{I}) = \frac{1}{|\mathcal{I}|} \sum_{a \in \mathcal{I}} 
J\left(C_{\mathcal{I}}, C_{\mathcal{I} \setminus \{a\}}\right)
\]
where $J\!\left(C_{\mathcal{I}}, C_{\mathcal{I} \setminus \{a\}}\right)$ denotes the Jaccard similarity between two clusters: the original cluster and the cluster induced after removing the item $a$ from the pattern $\mathcal{I}$. Formally,
\[
J\!\left(C_{\mathcal{I}}, C_{\mathcal{I} \setminus \{a\}}\right)
= \frac{C_{\mathcal{I}} \cap C_{\mathcal{I} \setminus \{a\}}}
{C_{\mathcal{I}} \cup C_{\mathcal{I} \setminus \{a\}}}.
\]

ACS quantifies the \textbf{average preservation of the cluster when items are removed from the pattern}. A \textbf{high ACS value} indicates that removing individual items has \textbf{limited impact on the cluster}, reflecting strong robustness and redundancy among pattern items. In contrast, a \textbf{low ACS value} suggests that items are individually critical for defining the cluster, meaning that the cluster structure is \textbf{highly sensitive} to item removal.

\paragraph{Complementarity of the Measures.}
The SVV and ACS provide complementary perspectives on pattern interpretability. The SVV evaluates how item contributions are distributed within the pattern, while ACS captures the overall stability of the cluster with respect to item removal. %

In this section, we provided a formal presentation of our approach. Its practical applicability is assessed through a series of experiments on real-world datasets, which will be described in detail in Section \ref{sec4}.
\section{Experiments}\label{sec4}

To evaluate the efficiency of our proposed approach, we performed an empirical study on several widely used real-world datasets, summarized in Table \ref{tab2}. We define $d$ as the density of a transactional dataset, representing the proportion of items present in each transaction relative to the total number of possible items. Higher density indicates that transactions contain many items, which increases the complexity of extracting frequent patterns.

\begin{table}[h]
\centering
    \begin{tabular}{llll}
    \hline
    Dataset& $|\mathcal{D}|$ & $|\mathcal{U}|$& $d$ (\%) \\
    \hline
    Lymph &148 & 68&40\\
    Mushroom &8124 & 119&18\\
    Primary-Tumor &336 & 31&48\\
    Soybean &630 & 50&32\\
    Tic-tac-toe &958 & 27&33\\
    Vote &435 & 48&33\\
   \hline
    \end{tabular}
  \caption{Real-world datasets }
\label{tab2}
\end{table}

Our experimental evaluation is conducted in three phases:

\begin{enumerate}
    \item \textbf{Pattern reduction.} In the first phase, we compare the number of $k$-RFPs before and after filtering redundant patterns that share the same $k$-cover. For each dataset, the minimum support threshold $\alpha$ is varied from $10\%$ to $40\%$. This allows us to assess whether redundant patterns persist under different support levels, noting that higher $\alpha$ values generally reduce the number of patterns due to increased selectivity.
    
    \item \textbf{ILP solving time and clustering quality.} In the second phase, we assess the impact of pattern filtering on ILP solving time by comparing our OCCM approach with the CCA-$k$-RFP-M1 method introduced by \cite{DBLP:conf/ecai/HassineJKRG24}. 
    To verify that clustering quality is preserved, we also evaluate the results using the F1-score with respect to ground-truth clusters. Unlike the ICS measure used by \cite{DBLP:conf/ecai/HassineJKRG24}, which considers only intra-cluster similarity, the F1-score provides a more comprehensive evaluation by directly comparing the obtained clusters with the true cluster assignments (ground-truth clusters).
    In this phase, for each dataset we selected the most suitable value of $\alpha$, aiming to maximize it in order to retain only the patterns necessary for extracting an optimal solution. The minimum value of $\alpha$ was fixed at $10\%$ across all datasets to avoid clustering timeouts.

    \item \textbf{Explainability Analysis of Pattern-Based Cluster Representations.}
In this final phase of the experiments, we evaluate how well each selected pattern represents the cluster it induces. Specifically, we analyze the stability and interpretability of patterns by examining item contributions and the sensitivity of the induced clusters to item removal. For each dataset, we compute the Shapley Value Variance (SVV) and the Average Cluster Stability (ACS) of the patterns discovered by the ILP model, and we assess the correlations between SVV and ACS, as well as between pattern size and ACS. This analysis provides insights into the representativeness of each pattern and the robustness of the corresponding clusters.

\end{enumerate}

It should be noted that, for generating the $k$-RFPs, we employed the same modified SAT solver introduced in \cite{DBLP:conf/ecai/HassineJKRG24}. Across all phases of our experiments, we adopted the same settings as in \cite{DBLP:conf/ecai/HassineJKRG24}. 
Specifically, the relaxation parameter was fixed to $k=1$, and the number of clusters $\theta$ was set to $2$. This choice is motivated by the ground-truth partitions of all considered datasets, which consist of exactly two clusters. Consequently, the clustering task can be interpreted as a binary classification problem. For this reason, we evaluate the clustering quality using the F1-score, a standard and widely adopted metric in binary classification settings.
For the experiments conducted in Phase II the maximum runtime was limited to one hour. 
A solution is considered not found if no optimal solution is obtained within the timeout, or if the solver continues searching past the time limit.

\subsection{Phase I: Comparison of Patterns Before and After Filtering}

In this phase, we compare the number of patterns obtained before and after the filtering step. The results of this comparison are summarized in Table \ref{tab3}.

The results reported in Table \ref{tab3} highlight the effectiveness of our filtering algorithm. Across all datasets and variations of the minimum support threshold $\alpha$, multiple patterns sharing identical covers are consistently detected. For each variation, we compute the percentage reduction (
$
\Delta = 100 \times \frac{|\mathcal{A}| - |\mathcal{A}_f|}{|\mathcal{A}|}
$
) observing reductions of up to $26.67\%$ in the \texttt{Tic-tac-toe} dataset. In all cases, we consistently obtain $|\mathcal{A}_f| < |\mathcal{A}|$, showing that the filtering step effectively removes redundancy while preserving cover diversity. These results empirically validate Proposition \ref{prop1} and Corollary \ref{cor1} from Section \ref{sec3}, confirming the existence of multiple patterns associated with the same $k$-cover.

\begin{table}[h]
%
%
%
   %
   \centering
\begin{tabular}{lllll}
\hline
    $\mathcal{D}$  &  $\alpha$  &  $|\mathcal{A}|$  &  $|\mathcal{A}_{f}|$ & $\Delta\;(\%)$
 \\
    \hline
    \multirow{4}{*}{Lymph}& $10\%$ &3605378 & \textbf{3349874} & 7.09 \\
     & $20 \%$& 759630& \textbf{719877} & 5.23 \\
     &$30\%$& 202602 & \textbf{190791}  & 5.83\\
     &$40 \% $&60470 & \textbf{55449} & 8.3\\
     \hline
     \multirow{4}{*}{Mushroom} & $10 \%$ & 128962& \textbf{118226}& 8.32 \\
     &$20\%$ & 19712 & \textbf{18176}& 7.79 \\
     &$30 \%$ & 4055 & \textbf{3726} & 8.11\\
     &$ 40 \%$ & 1135 & \textbf{1008} & 11.19 \\
     \hline
     \multirow{4}{*}{Primary-Tumor} & $10 \%$ & 256991& \textbf{243450} & 5.27 \\
     & $20\%$ & 76081 & \textbf{75220} & 1.13 \\
     & $30 \%$ & 30372 & \textbf{30313} & 0.19 \\
     & $40 \%$ & 14778 & \textbf{14764} & 0.09 \\
     \hline
     \multirow{4}{*}{Soybean} & $10 \%$ & 69191 & \textbf{68199} & 1.43  \\
     & $20\%$  & 11900 & \textbf{11664} & 1.98 \\
     & $30 \%$ & 3383 & \textbf{3257} & 3.72 \\
     & $40 \%$ & 1484 & \textbf{1400}  & 5.66 \\
     \hline
     \multirow{4}{*}{Tic-tac-toe} & $10\%$ & 4479 & \textbf{4453} & 0.58  \\
     & $20\%$  & 811 & \textbf{786} & 3.08  \\
& $30 \%$ & 171 & \textbf{154} & 9.94  \\
& $40 \%$ & 15 & \textbf{11} & 26.67 \\
\hline
\multirow{4}{*}{Vote}& $10\%$ & 280386 & \textbf{280179} & 0.07 \\
 & $20\%$  & 34098 & \textbf{34065} & 0.1 \\
& $30 \%$ & 6606 & \textbf{6576} & 0.45 \\
& $40 \%$ & 693 & \textbf{668} & 3.61 \\
     \hline
\end{tabular}
\caption{Comparison of the number of patterns before and after filtering.}

\label{tab3}
\end{table}

\subsection{Phase II: Impact of Filtering on ILP Solving Time and Clustering Quality}
In this phase, we evaluate the performance of our OCCM approach in comparison with the method proposed by \cite{DBLP:conf/ecai/HassineJKRG24} named CCA-$k$-RFP-M1. The results of this comparison are summarized in Table \ref{tab4}.

The results
reveal several important insights.
In terms of runtime performance, our optimized approach OCCM consistently achieves faster solving times across all datasets, confirming the effectiveness of eliminating redundant patterns prior to optimization. A single exception is observed for the \texttt{Tic-tac-toe} dataset, where the computation exceeded the time limit and no optimal solution was obtained. A similar behavior is observed for the CCA-$k$-RFP-M1 model, which is consistent with the high combinatorial complexity of this dataset and was therefore expected.
Regarding clustering quality, both approaches yield comparable results on three datasets (\texttt{Lymph}, \texttt{Soybean}, and \texttt{Vote}), demonstrating that redundancy removal does not degrade the quality of the resulting clusters. Interestingly, OCCM achieves higher clustering quality on the \texttt{Mushroom} and \texttt{Primary-Tumor} datasets. This improvement suggests that removing redundant patterns not only reduces computational complexity but can also enhance the selection of representative patterns, leading to more coherent cluster descriptions. This is due to the filtering strategy that prioritizes the largest itemset whenever multiple patterns share the same $k$-cover.
\begin{table*}[h]
\centering
\begin{tabular}{llllllll}
\hline
 \multirow{2}{*}{$\mathcal{D}$}  &  \multirow{2}{*}{$\alpha$}  &  \multicolumn{3}{c}{CCA-$k$-RFP-M1}  & \multicolumn{3}{c}{OCCM} \\
    \cmidrule(r){3-5}   \cmidrule(l){6-8}&
         & $|\mathcal{A}|$  & F1-score &  CPU time & $|\mathcal{A}_{f}|$ & F1-score & CPU time \\
 \hline
 Lymph  & $37 \%$ & 91888& \textbf{0.71} & 49.66  & \textbf{85470}& \textbf{0.71} & \textbf{29.74}\\
 Mushroom  & $20 \%$ & 19712& 0.34 & 3133.34  & \textbf{18176}& \textbf{0.73}& \textbf{1144.46} \\
 Primary-Tumor  & $25 \%$ & 45465& 0.25 & 55.88  & \textbf{45250}& \textbf{0.33}& \textbf{55.71} \\
 Soybean  & $20 \%$ & 11900& \textbf{0.29} & 18.29  & \textbf{11664} & \textbf{0.29} & \textbf{16.99} \\
 Tic-tac-toe  & $10 \%$ & 811& - & -  & \textbf{786}& -& - \\
 Vote  & $10 \%$ & 280386 & \textbf{0.51} & 1206.17  &  \textbf{280179}& \textbf{0.51} & \textbf{234.30} \\
 %
 \hline
\end{tabular}
\caption{ OCCM vs. CCA-$k$-RFP-M1. }
\label{tab4}
\end{table*}
\subsection{Phase III: Explainability Analysis of Pattern-Based Cluster Representations.}
In this phase, we evaluate the representativeness of patterns selected by the ILP model with respect to their induced clusters. Results are summarized in Table \ref{tabshapley} and graphically illustrated in Figures \ref{fig3} and \ref{fig4}. The \texttt{Tic-tac-toe} dataset was excluded, as no optimal solution was found.

In Figure \ref{fig3}, each dataset is represented by two patterns, illustrated as colored circles.
Each two circles of the same color correspond to the two representative patterns of a dataset. Pattern representativeness is assessed using Shapley Value Variance (SVV), which captures the dispersion of item contributions, and Average Cluster Stability (ACS), which measures the robustness of the induced cluster under item removal. Each circle represents a pattern positioned according to its corresponding pair of SVV and ACS values, i.e., $(SVV, ACS)$, while the circle size reflects the size of the pattern. 
For most datasets (\texttt{Lymph}, \texttt{Mushroom}, \texttt{Soybean}, \texttt{Vote}), we observe a negative correlation between SVV and ACS: patterns with more balanced item contributions (lower SVV) induce more stable clusters (higher ACS). 
The \texttt{Primary-Tumor} dataset appears to exhibit an opposite relationship between SVV and ACS. However, this behavior coincides with an increase in pattern size. Since pattern size consistently shows a strong positive correlation with ACS across all datasets, the observed increase in stability may be primarily explained by the larger pattern size rather than by the increase in SVV. This observation suggests that the stabilizing effect of pattern size can dominate the influence of contribution variability (SVV).

Figure \ref{fig4} shows a consistent positive correlation between pattern size and ACS across all datasets: larger patterns produce more stable clusters. This suggests that larger patterns provide a more complete description of cluster structure, making them less sensitive to item removal.
These findings validate our pattern selection strategy when multiple patterns share the same $k$-cover. Prioritizing larger patterns consistently increases the likelihood of selecting stable and representative patterns. Overall, a strong representative pattern is characterized by (i) preserving the induced cluster under item removal, and (ii) providing an expressive description of the cluster through a larger set of contributing items.

\begin{table}[h]
\centering
\begin{tabular}{llll}
\hline
$\mathcal{D}$&  Pattern size & SVV & ACS \\
\hline
\multirow{2}{*}{Lymph} & $|\mathcal{I}_1| = 4 $ & 48.89 & 0.84 \\
& $|\mathcal{I}_2| =  13  $ & 2.22 & 0.95 \\
\hline
\multirow{2}{*}{Mushroom} & $|\mathcal{I}_1| = 5 $ & 146486.32 & 0.84 \\
& $|\mathcal{I}_2| = 8 $ & 27367.52 & 0.86 \\
\hline
\multirow{2}{*}{Primary-Tumor} & $|\mathcal{I}_1| = 2 $ & 12.5 & 0.53 \\
& $|\mathcal{I}_2| = 4 $ & 319.02 & 0.75 \\
\hline
\multirow{2}{*}{Soybean}  & $|\mathcal{I}_1| = 3 $ & 4742.43 & 0.70 \\
& $|\mathcal{I}_2| = 9 $ & 6.26 & 0.97 \\
\hline
\multirow{2}{*}{Vote} & $|\mathcal{I}_1| = 2 $ & 5724.5 & 0.53 \\
& $|\mathcal{I}_2| = 3 $ & 1196.43 & 0.58 \\
\hline
\end{tabular}
\caption{Pattern size, SVV, and ACS values for each dataset.}
\label{tabshapley}
\end{table}

\begin{figure}[h!]
    \begin{center}
    \includegraphics[width=0.88\linewidth]{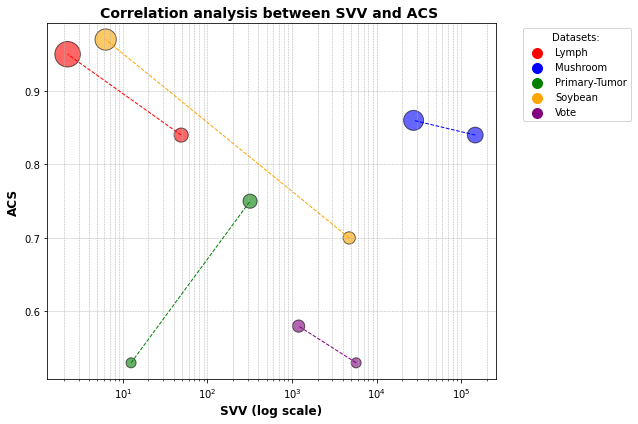}
    \end{center}
    \caption{Correlation between Shapley Value Variance (SVV) and Average Cluster Stability (ACS).}
    \label{fig3}
\end{figure}
\begin{figure}[h!]
    \begin{center}
    \includegraphics[width=0.88\linewidth]{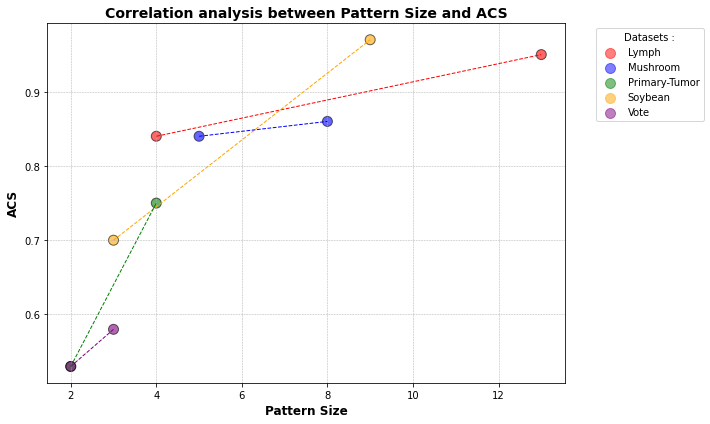}
    \end{center}
    \caption{Influence of Pattern Size on Cluster Stability.}
    \label{fig4}
\end{figure}
\section{Conclusion and future work}\label{sec5}
In this paper, we introduced OCCM, an optimized conceptual clustering method designed to eliminate redundancy in the generation of $k$-RFPs. Our work is motivated by the theoretical observation that distinct $k$-RFPs may share the same $k$-cover, which leads to redundant symbolic descriptions and increases the computational complexity of the ILP solver during the clustering phase. To address this limitation, we proposed a filtering strategy that retains a single representative pattern for each cover while favoring the largest itemset. This choice is motivated by promoting richer and more informative cluster descriptions, which is essential in explainable and knowledge-driven clustering settings.
Beyond optimization, we conducted a detailed analysis of the ILP output to evaluate the representativeness and stability of the selected patterns with respect to their induced clusters. To this end, we quantified the contribution of individual items using Shapley values and extended this analysis to itemsets through the SVV and ACS measures. These measures provide complementary insights into the distribution of item contributions and the robustness of clusters to item removal, allowing us to better understand the explanatory power of selected patterns.
Extensive experiments on several real-world datasets validate the effectiveness of the proposed approach. The results confirm the frequent presence of redundant patterns and demonstrate that their elimination substantially reduces the number of candidate patterns, leading to improved ILP solving times without compromising clustering quality. Furthermore, the experimental interpretability analysis reveals a consistent relationship between pattern size and cluster stability, showing that larger patterns tend to provide more representative and robust cluster descriptions. This empirical observation supports our design choice of prioritizing larger itemsets when selecting representative patterns.
This work opens several promising research directions. First, we aim to incorporate redundancy-awareness directly into the SAT-based generation process of $k$-RFPs, rather than relying solely on post-processing. Although we previously investigated constraints addressing this issue, their encoding complexity significantly limited solver scalability. Future work will therefore focus on designing more compact and efficient formulations. Second, we plan to extend the ILP objective function by integrating interpretability-oriented criteria, since maximizing pattern size alone does not fully capture explanatory quality. %

\bibliographystyle{kr}


\end{document}